\documentclass[10pt]{article} 
\usepackage[preprint]{tmlr}

\usepackage{amsmath,amsfonts,bm}

\def\eqref#1{equation~\ref{#1}}

\def\1{\bm{1}}

\DeclareMathAlphabet{\mathsfit}{\encodingdefault}{\sfdefault}{m}{sl}
\SetMathAlphabet{\mathsfit}{bold}{\encodingdefault}{\sfdefault}{bx}{n}

\usepackage{graphicx}
\usepackage{hyperref}
\usepackage{url}
\usepackage{tabularx}
\usepackage{booktabs}
\usepackage{xcolor}
\usepackage{threeparttable}
\usepackage{booktabs}
\usepackage{amsthm}
\theoremstyle{plain}
\newtheorem{lemma}{Lemma}[section]

\title{Rotary Positional Embeddings as Phase Modulation: Theoretical Bounds on the RoPE Base for Long-Context Transformers\\}

\author{\name Feilong Liu\\ \email drbruceliu@gmail.com\\
      \addr https://www.linkedin.com/in/feilong-liu-19b6b18/}

\def\month{MM}  
\def\year{YYYY} 
\def\openreview{\url{https://openreview.net/forum?id=XXXX}} 

\begin{document}

\maketitle

\begin{abstract}
Rotary positional embeddings (RoPE) are widely used in large language models to encode token positions through multiplicative rotations, yet their behavior at long context lengths remains poorly characterized. In this work, we reinterpret RoPE as phase modulation applied to a bank of complex oscillators, enabling analysis through classical signal processing theory.

Under this formulation, we derive principled lower bounds on the RoPE base parameter that are necessary to preserve positional coherence over a target context length. These include a fundamental aliasing bound, analogous to a Nyquist limit, and a DC-component stability bound that constrains phase drift in low-frequency positional modes. We further extend this analysis to deep transformers, showing that repeated rotary modulation across layers compounds angular misalignment, tightening the base requirement as depth increases.

Complementing these results, we derive a precision-dependent upper bound on the RoPE base arising from finite floating-point resolution. Beyond this limit, incremental phase updates become numerically indistinguishable, leading to positional erasure even in the absence of aliasing. Together, the lower and upper bounds define a precision- and depth-dependent feasibility region—a “Goldilocks zone”—for long-context transformers.

We validate the framework through a comprehensive case study of state-of-the-art models, including LLaMA, Mistral, and DeepSeek variants, showing that observed successes, failures, and community retrofits align closely with the predicted bounds. Notably, models that violate the stability bound exhibit attention collapse and long-range degradation, while attempts to scale beyond one million tokens encounter a hard precision wall independent of architecture or training.

Our analysis establishes RoPE base selection as a fundamental necessary architectural constraint, rather than a tunable hyperparameter, and provides practical guidance for designing, scaling, and retrofitting long-context transformers under realistic numerical limits.
\end{abstract}

\section{ Introduction}

The ability to model long‑range dependencies is central to the recent progress of large language models. As context lengths extend from thousands to hundreds of thousands of tokens, the choice and parametrization of positional encodings have become a primary bottleneck in both training stability and inference reliability. Several approaches have been proposed to improve length generalization, including sinusoidal embeddings, learned positional embeddings, attention with linear biases (ALiBi), and rotary positional encoding (RoPE) \citep{vaswani2017attention, su2021roformer, press2022alibi}. Among these, rotary positional embeddings have emerged as a de facto standard in modern transformer‑based language models \citep{touvron2023llama, deepseekv2} due to their parameter efficiency, extrapolation properties, and compatibility with attention mechanisms. Yet despite their widespread adoption, the fundamental limits governing RoPE at long context lengths remain poorly understood.

Empirically, RoPE‑based models exhibit a range of failure modes as context length increases, including degradation in positional discrimination, sensitivity to base scaling, and brittle behavior across model depth and numerical precision. To mitigate these issues, several studies have proposed heuristic modifications to RoPE—such as rescaling the base, remapping frequency spectra, or interpolating positional phases—to extend usable context lengths \citep{su2021roformer, chen2023pi, peng2023yarn}. While often effective, these approaches are primarily empirical and lack a unifying theoretical framework explaining why certain adjustments succeed or fail.

Long-context stress tests, including Needle-in-a-Haystack \citep{kamradt2023needle, zhong2025ropeattention} and RULER \citep{hsieh2024ruler}, have systematically investigated these failure modes. They reveal attention collapse and retrieval errors at extreme sequence lengths, providing empirical benchmarks that complement our theoretical stability and precision bounds. Similar patterns are also predicted by scaling law analyses, which quantify performance degradation when models are evaluated beyond their training distribution \citep{hernandez2021scaling}.

A key challenge in analyzing RoPE is that positional information is injected multiplicatively through rotations rather than additively through embedding vectors. This distinction complicates direct comparison with classical sinusoidal or learned positional encodings and obscures the accumulation of positional distortion across transformer layers. As a result, existing analyses typically reason about RoPE geometrically or layer-locally, without explicitly modeling how phase errors propagate with depth or interact with finite numerical precision.

In this work, we propose a complementary perspective: we reinterpret RoPE as phase modulation applied to a bank of complex oscillators with geometrically spaced frequencies. This signal-processing formulation makes the spectral structure of RoPE explicit and enables analysis using well-established tools from harmonic analysis and numerical computation. Under this view, positional encoding fidelity is governed by the stability of phase relationships across frequencies, layers, and finite-precision arithmetic.

This perspective yields several concrete insights. First, we derive necessary lower bounds on the RoPE base parameter required to preserve positional coherence over a target context length. These include a fundamental aliasing constraint, analogous to a Nyquist limit, and a stability constraint on the lowest-frequency (DC-aligned) modes that dominate long-range alignment. Second, we show that in deep transformers, repeated rotary modulation compounds angular misalignment across layers, tightening the base requirement as depth increases—even when per-layer distortions are small. Third, we derive a precision-dependent upper bound on the RoPE base induced by finite floating-point resolution, beyond which incremental phase updates become numerically indistinguishable, leading to positional erasure.

Taken together, these lower and upper bounds define a feasibility region for RoPE base selection that depends jointly on context length, model depth, and numerical precision. This “Goldilocks zone” clarifies why certain base choices or scheduling strategies succeed in practice while others fail, and why improvements observed under one precision regime may not transfer to another.

Our goal is not to propose a new positional encoding or heuristic modification, but to provide a first-principles characterization of the constraints that govern existing RoPE-based architectures. By grounding RoPE in a signal-processing framework, we aim to clarify the trade-offs underlying long-context transformer design and to offer principled guidance for base selection and scheduling in deep, long-context models.

\subsection{Contributions}
This paper makes the following contributions:
\begin{enumerate}
    \item \textbf{A signal-processing formulation of RoPE as phase modulation.}
    
    We reinterpret rotary positional embeddings as phase modulation applied to a bank of complex oscillators with geometrically spaced frequencies. This formulation makes explicit the spectral structure of RoPE and enables analysis using classical tools from signal processing, providing a unified lens for understanding positional encoding behavior in transformers.

    \item \textbf{Principled lower bounds on the RoPE base from positional stability.}
    
    Under the phase modulation perspective, we derive necessary lower bounds on the RoPE base parameter required to preserve positional coherence over a target context length. These include:
    \begin{itemize}
        \item a fundamental aliasing bound, analogous to a Nyquist limit, which constrains the maximum unambiguous context length, and
        \item a DC-component stability bound that limits phase drift in the lowest-frequency positional modes responsible for long-range alignment.
    \end{itemize}

    \item \textbf{Extension of stability bounds to deep transformers via layer compounding.}
    We show that in multi-layer transformers, repeated rotary modulation compounds angular misalignment across layers, tightening the base requirement as model depth increases. This yields a depth-dependent stability bound that explains why deeper models require larger bases or spectrum-shaping strategies to maintain long-range coherence.

    \item \textbf{An upper bound on the RoPE base induced by numerical precision.}
    We derive a complementary precision-dependent upper bound on the RoPE base arising from finite floating-point resolution. Beyond this limit, incremental phase updates become numerically indistinguishable, leading to positional erasure even in the absence of aliasing. This establishes a hardware-dependent constraint that interacts non-trivially with depth-driven lower bounds.

    \item \textbf{A feasibility region for long-context transformers.}
    Combining the derived lower and upper bounds, we characterize a precision- and depth-dependent feasibility region for RoPE base selection. This “Goldilocks zone” formalizes the trade-off between context length, model depth, and numerical precision, and provides principled guidance for selecting or scheduling RoPE bases in long-context transformer architectures.
    \item \textbf{Validation through case studies of state-of-the-art long-context models.}
    We apply the proposed framework to a range of widely deployed models, including LLaMA, Mistral, and DeepSeek variants, showing that observed successes, failures, and community-driven retrofits align closely with the predicted bounds. These case studies demonstrate that RoPE base selection constitutes a fundamental necessary architectural constraint, and explain previously reported attention collapse, “lost-in-the-middle” behavior, and precision-induced failure modes in long-context transformers.

\end{enumerate}

\subsection{Relation to prior work}
Positional encoding has been extensively studied as a core component of transformer architectures. Early approaches employed absolute positional embeddings learned jointly with token representations, while subsequent work introduced deterministic encodings based on sinusoidal functions to enable length extrapolation. Rotary positional embeddings (RoPE), originally proposed in RoFormer \citep{su2021roformer}, extend this line of work by applying position-dependent rotations directly in the query and key spaces, and have since been adopted in many large-scale language models.

\textbf{Analyses and modifications of RoPE.}\\ 
Recent work such as LongRoPE proposes frequency reparameterization of RoPE embeddings to improve long-context extrapolation by stretching or compressing rotary frequencies across positions \citep{ding2024longrope}. From the perspective of our phase-modulation framework, LongRoPE can be viewed as a learned or adaptive mapping of the nominal position variable, analogous to YaRN \citep{peng2023yarn}, but optimized end-to-end rather than manually designed.

\textbf{Relation to NTK-RoPE and RoPE Scaling Methods.}\\ 
Several recent works have proposed NTK-based scaling rules for rotary positional embeddings, often referred to as NTK-RoPE, which aim to preserve attention kernel similarity when extrapolating beyond the training context length \citep{bloc97ntk2023, chen2023pi, peng2023yarn}. 

Our work is complementary but addresses a fundamentally different question. NTK-RoPE assumes that the underlying positional encoding remains well-defined and numerically stable, and focuses on maintaining functional generalization under extrapolation. In contrast, we analyze when RoPE itself can preserve positional information at all, independent of training dynamics or kernel alignment.

From a signal-processing perspective, NTK-RoPE modifies the effective frequency schedule of RoPE, but does not alter the core phase modulation mechanism or its interaction with depth and numerical precision. As a result, NTK-based scaling operates within the feasibility region defined by our stability and precision bounds, but cannot compensate for violations of these bounds. When the RoPE base lies below the depth-dependent stability minimum, or above the precision-induced upper limit, positional signals necessarily degrade or collapse, regardless of NTK-inspired rescaling.

This distinction helps explain empirical observations in which NTK-style scaling improves extrapolation for some models, yet fails for others with deeper architectures or extreme context lengths. Attention-level empirical analyses similarly observe that RoPE scaling modifies attention weight distributions and long-range interaction patterns rather than preserving the original positional geometry \citep{zhong2025ropeattention}, supporting our interpretation that such techniques delay—but do not eliminate—stability violations. Our framework provides the missing structural constraints that determine whether such scaling strategies are viable in the first place.

\textbf{Connections to sinusoidal and spectral positional encodings.}\\
Sinusoidal positional embeddings and related Fourier-based encodings have long been analyzed through a frequency-domain lens, emphasizing expressivity, extrapolation, and inductive bias \citep{vaswani2017attention, shaw2018relative, press2022alibi, kazemnejad2023positional, tancik2020fourier}. In these approaches, positional information is added linearly to token embeddings, enabling certain spectral interpretations and predictions about long-sequence generalization. Prior work has also shown that self-attention layers can be interpreted as structured convolutional operators, which helps justify frequency-domain analysis of transformers \citep{cordonnier2020relationship}. By contrast, RoPE injects positional information multiplicatively via complex rotations, rather than additively through embedding vectors \citep{su2021roformer, men2024ropebase}. As a result, prior analyses of sinusoidal encodings do not fully capture RoPE’s behavior, particularly under repeated application across multiple transformer layers, where phase compounding and DC-component stability become critical for long-context coherence.

\textbf{Depth, compounding effects, and stability.}\\ 
The interaction between positional encoding and transformer depth has received relatively little theoretical attention. Existing analyses typically consider single-layer or shallow settings, implicitly assuming that positional distortions do not accumulate across layers. In contrast, we explicitly model the compounding effect of rotary modulation across depth, showing that even small per-layer angular deviations can lead to exponential decay of alignment in deep transformers \citep{su2021roformer, peng2023yarn}.

\textbf{Numerical precision and positional encoding.}\\ 
Finite-precision effects in transformers have been studied primarily in the context of training stability and optimization dynamics. However, the impact of floating-point precision on positional encoding fidelity—particularly for large rotation bases and long context lengths—remains underexplored. Our work formalizes this interaction by deriving a precision-dependent upper bound on the RoPE base, revealing a failure mode that is as fundamental as aliasing or architectural design \citep{goldberg1991floating, higham2002accuracy, micikevicius2018mixedprecision}.

\textbf{Positioning of this work.}\\ 
In summary, prior work on RoPE and long-context transformers has largely focused on empirical fixes or geometric reinterpretations. This paper complements these efforts by providing a first-principles, signal-processing-based analysis of RoPE that yields explicit, depth- and precision-aware bounds on the base parameter. Rather than proposing a new positional encoding, we aim to clarify the fundamental constraints that govern existing RoPE-based architectures.

\section{Theoretical background}
\subsection{Rotary Positional Embeddings}
Consider a transformer with hidden dimension d, where rotary positional embeddings are applied to the query ($Q$) and key ($K$) vector representations. Let $x \in \mathbb{R}^d = [x_1, x_2, \dots, x_d]$ denote a token representation before positional encoding. RoPE \citep{su2021roformer} partitions the feature dimension into $d/2$ two-dimensional subspaces $x = [(x_{2i-1}, x_{2i})]$ for $i \in \{1, \dots, d/2\}$  and applies a position-dependent rotation to each subspace as below:
\[ \begin{bmatrix} x'_{2i-1} \\ x'_{2i} \end{bmatrix} = \begin{bmatrix} \cos(p\theta_i) & -\sin(p\theta_i) \\ \sin(p\theta_i) & \cos(p\theta_i) \end{bmatrix} \begin{bmatrix} x_{2i-1} \\ x_{2i} \end{bmatrix} \]

where, $\theta_i = \text{base}^{-2(i-1)/d}$ for $i \in \{1, \dots, d/2\}$ is a frequency specific to the dimension pair, and $p$ is the token position.
Applying these rotations across all subspaces yields the RoPE-transformed representation $\text{RoPE}(x,p)$. This construction ensures that each dimension pair is rotated in a circle by an angle proportional to the token’s position, and dot products $Q \cdot K$ in attention corresponds to position differences rather than absolute positions between tokens.

\subsection{Complex-Valued Representation}
An equivalent and more analytically convenient formulation expresses RoPE using complex-valued features. Each two-dimensional vector pair subspace is identified with a complex number:
\[ z_i = x_{2i-1} + j x_{2i}, \]
where $j = \sqrt{-1}$. Under this representation, the RoPE transformation becomes:
\[ z'_i(p) = z_i \cdot e^{j p \theta_i}. \]
and 
\[
x'_{2i-1} = \Re(z'_i(p)) 
= x_{2i-1}\cos(p\theta_i) - x_{2i}\sin(p\theta_i)
\]
\[
x'_{2i} = \Im(z'_i(p)) 
= x_{2i-1}\sin(p\theta_i) + x_{2i}\cos(p\theta_i)
\]
Thus, RoPE can be viewed as multiplying each complex feature $z_i$ by a position-dependent complex exponential $e^{i \theta_i p}$. This formulation makes explicit that RoPE performs phase modulation with frequency $\theta_i$ and phase $\phi_i(p) = p \theta_i$. RoPE rotates the 2D vector, which is equivalent to a linear phase modulation of a complex vector. 

\subsection{RoPE as Phase Modulation of Oscillator Banks}
Under the complex formulation, RoPE corresponds to a bank of complex oscillators with geometrically spaced frequencies $\mathbf{v}(p) = [e^{j\theta_1 p},\, e^{j\theta_2 p},\, \ldots,\, e^{j\theta_{d/2} p}]$
, Where the angular frequency $\theta_i = \text{base}^{-2(i-1)/d}, \quad i \in \{1, 2, \ldots, d/2\}$. Token position p acts as the modulation variable, advancing the phase of each oscillator linearly with p.
From this perspective, positional encoding fidelity depends on maintaining coherent phase relationships across oscillators. High-frequency components $\theta_{\max} = 1$ (high-frequency, local texture) encode local positional distinctions, while low-frequency components $\theta_{\min} = \frac{1}{\text{base}}$ (low-frequency, global context) dominate long-range alignment. In particular, the lowest-frequency modes play a critical role in preserving coarse positional structure over long contexts.
This oscillator-bank view connects RoPE directly to classical signal processing concepts, such as aliasing, phase drift, and frequency resolution, which form the basis of our subsequent theoretical analysis.

\subsection{Depth and Repeated Rotary Modulation}
In a transformer with $N$ layers, RoPE is applied independently at each layer to the query and key representations. Let $z_i^{(\ell)}$ denote the complex feature at layer $\ell$. The effect of RoPE across layers can be expressed as
\[
z_i^{(\ell)}(p) = z_i^{(\ell)} e^{j p \theta_i} \quad \ell \in \{1, 2, \ldots, N\}.
\]

Although the rotation angles are identical across layers, their effect on positional alignment compounds through repeated application. Small angular misalignments introduced at each layer—whether due to frequency choice, depth-dependent interactions, or numerical error—can accumulate, leading to decay in positional coherence as depth increases.
This observation motivates our depth-dependent stability analysis in later sections.

\subsection{Numerical Precision Considerations}
In practical implementations, RoPE is computed using finite-precision arithmetic, typically in FP16, BF16, or FP32. For large RoPE bases or long context lengths, the incremental phase $p \theta_i$ may fall below machine precision, causing $e^{j p \theta_i}$ to become numerically indistinguishable from unity. The behavior of floating-point rounding and resolution in such regimes follows classical results in numerical analysis \citep{goldberg1991floating, higham2002accuracy}, which characterize how representable spacing grows with magnitude. This connection strengthens that RoPE phase increments are fundamentally limited by well-established numerical bounds.

This phenomenon effectively erases positional information for affected frequency bands, even when aliasing constraints are satisfied. Understanding this limitation requires explicit consideration of floating-point resolution, which we incorporate into our analysis when deriving precision-dependent upper bounds on the RoPE base.

\subsection{Summary and Setup for Analysis}
This section establishes a signal-processing interpretation of RoPE as phase modulation over a bank of complex oscillators. This formulation exposes the key quantities—frequency spacing, phase accumulation, depth compounding, and numerical resolution—that govern positional encoding stability. In the following sections, we leverage this perspective to derive explicit lower and upper bounds on the RoPE base parameter for long-context transformers.

\section{Theoretical bound for RoPE base}
In this section, we derive necessary lower bounds on the RoPE base parameter base required to preserve positional coherence over a target context length. These bounds arise from fundamental constraints on phase modulation and hold independently of architectural details or training procedures. We begin with a frequency-domain aliasing constraint and then derive a stability bound for low-frequency (DC-aligned) modes. Finally, we extend the analysis to deep transformers, showing how these constraints tighten with depth.

\subsection{Preliminaries and Notation}
Let $p \in \{0, 1, \ldots, L-1\}$ denote token position, where $L$ is the target context length. Recall that RoPE applies a phase rotation
$z'_i(p) = z_i \cdot e^{j p \theta_i}$, where $\theta_i = \text{base}^{-2(i-1)/d}$ for $i \in \{1, 2, \ldots, d/2\}$. We say that positional encoding is \emph{coherent} over length $L$ if distinct positions induce distinguishable phase configurations across the oscillator bank, and if relative positional alignment is preserved under attention.

\subsection{Theoretical lower bound for RoPE base}
\subsubsection{The Fundamental Aliasing Limit}
For the model to distinguish unique positions across a context window $L$, the positional encoding function must be injective (one-to-one) over the domain [0, $L$]. While high-frequency heads wrap around (alias) repeatedly, the model resolves this ambiguity using lower-frequency heads (similar to the Chinese Remainder Theorem). However, the lowest frequency head at  $\theta_{\min}$ (the "Fundamental") cannot be aliased. It serves as the absolute global reference.

The period (wavelength) of the fundamental oscillator is:
\[
T_{\text{fund}} = \frac{2\pi}{\theta_{\min}} = 2\pi \cdot \text{base}
\]
\textbf{Theorem 1 (The Fundamental Aliasing Limit)}: 
To preserve the global context stability, the context length $L$ must fall within the primary period of the fundamental oscillator, i.e.,
\[L < T_{\text{fund}} = 2\pi \cdot \text{base}\]
Which is:  $\text{base} > \frac{L}{2\pi}$

This inequality is the Nyquist Stability Bound \citep{shannon1949communication, unser2000sampling} for RoPE. If the base violates this, the global position signal becomes ambiguous (aliases), and the "grid" collapses.

\begin{figure}[h]
\begin{center}
\includegraphics[width=0.9\linewidth]{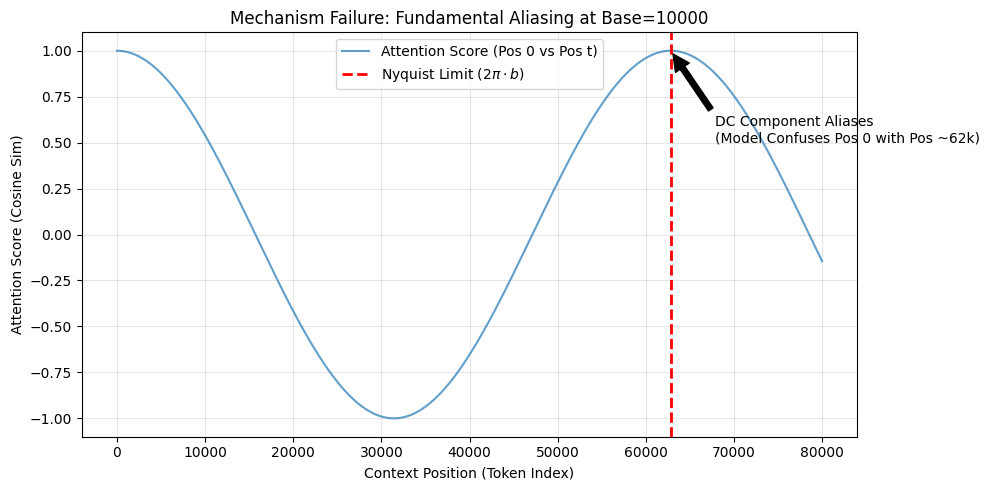}
\end{center}
\caption{Fundamental Aliasing in RoPE Embeddings (Base=10,000).}
\end{figure}

To expose the periodicity vulnerability in RoPE, we generate rotary embeddings for a sequence of 100,000 tokens. We extract the query vector at Position 0 and compute its cosine similarity (attention score) against every key position from 0 to 80,000.

The results confirm the aliasing hypothesis. The attention score follows a smooth cosine decay from 1.0 (perfect self-attention at Position 0) down to -1.0, then back up—until it spikes back to near-perfect correlation at Position $\approx 62,832$.  This position corresponds exactly to $2\pi \cdot b$ where $b=10,000$ (the standard RoPE base). At this Nyquist-like limit, this creates a ``collision horizon'' at $\sim 62$k tokens. The model fundamentally cannot distinguish Position 0 from Position $\sim 62$k. Beyond this point, the model's positional understanding breaks down not gradually, but catastrophically—confusing the beginning of context with arbitrary distant positions. This explains why naive context window extension without base adjustment leads to incoherent long-range attention.

\subsubsection{Single layer DC-component stability Limit} 
In classical signal processing, the DC component refers to the zero-frequency component of a signal, corresponding to its mean value and governing global signal structure \citep{oppenheim2014dsp}. Rotary positional embeddings do not contain a literal zero-frequency term; however, the lowest-frequency RoPE mode plays an analogous role as a quasi-DC reference component. Over long context lengths, this mode evolves most slowly with position and therefore governs the global positional coordinate system used by the attention mechanism.

Under RoPE, positional information is encoded through phase rotations applied to complex-valued feature pairs. For a frequency index i, the phase evolves as a linear function of token position. The lowest-frequency component accumulates phase most gradually, enabling it to preserve coarse positional ordering across distant tokens. Stability of this component is therefore critical: excessive phase drift in this mode disrupts the global positional reference frame even if higher-frequency components remain locally coherent.

This behavior parallels classical phase modulation systems, where a constant bias in the modulating signal produces only a fixed phase offset rather than an accumulating distortion \citep{haykin2009communication, proakis2002cse, carlson2010communication}. In such systems, global phase offsets preserve relative signal geometry, whereas position-dependent phase errors introduce cumulative misalignment. By analogy, RoPE remains invariant to global phase rotations but becomes unstable when per-position phase increments grow too large.

From this perspective, maintaining stability of the quasi-DC RoPE component requires that successive positional phase increments remain sufficiently small to preserve high cosine similarity between adjacent rotations. When this condition is violated, even modest angular deviations accumulate with position and systematically attenuate the contribution of the lowest-frequency component to attention alignment.

\textbf{Theorem 2 (DC-component stability Limit)}:  To maintain stable long-range positional alignment over a context length $L$, the RoPE base must satisfy
\[ \text{base} \ge \frac{L}{\arccos(\varepsilon)} \]
Where $L$ is the maximum sequence length you want the model to handle and $\epsilon$ is minimum acceptable correlation, called coherence (e.g., 0.95), so that attention still works.  

\begin{figure}[h]
\begin{center}
\includegraphics[width=0.9\linewidth]{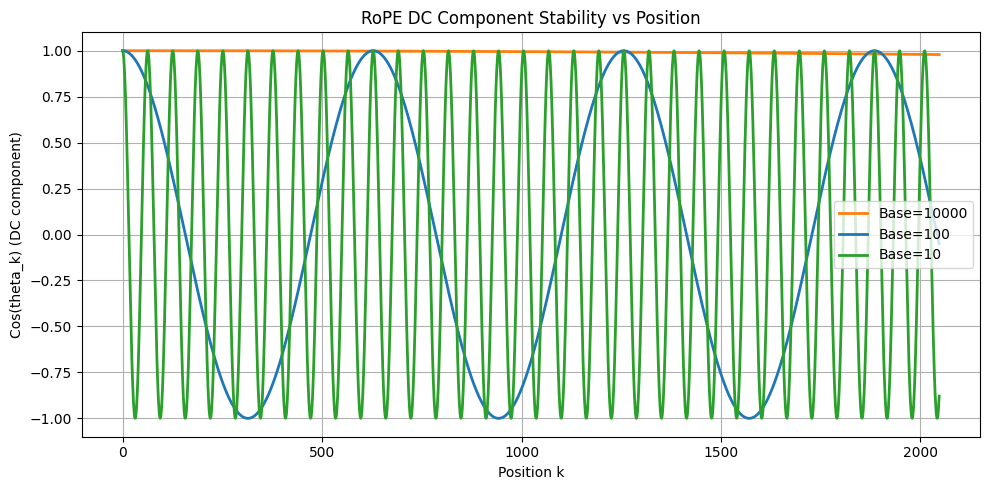}
\end{center}
\caption{Effect of RoPE Base on DC component Component Stability.}
\end{figure}

Figure 2 plots $cos(p \theta_i)$ across positions 0–2000, comparing three RoPE bases: 10,000 (standard), 100, and 10. Only Base=10,000 (yellow flat line at 1.0) maintains stable phase coherence. The smaller bases exhibit severe oscillation—Base=100 cycles multiple times, while Base=10 oscillates so rapidly it appears as high-frequency noise.
These observations are consistent with Theorem 2. A base that's too small causes the DC component to "spin" too fast: the model loses its global positional reference frame because distant positions have uncorrelated phase values. Even at position 2000, Base=10 has completed ~60+ full rotations. The model cannot distinguish position 0 from position 100, 200, etc.

\textbf{Representative Coherence Regimes Induced by the Stability Threshold $\varepsilon$}

The stability bounds derived above depend on a cosine similarity threshold $\varepsilon$, which specifies the minimum acceptable preservation of phase alignment for the lowest-frequency RoPE component. In this framework, $\varepsilon$ controls the allowable angular deviation per positional increment and therefore determines the trade-off between positional coherence and required RoPE base.

Because the minimum base scales as 1/arccos($\varepsilon$), different values of $\varepsilon$ correspond to distinct coherence regimes. While the optimal threshold may depend on downstream tasks and architectural factors (Future work may explore task-dependent selection of $\varepsilon$ through empirical calibration or adaptive stability criteria), several representative operating points illustrate the design trade-offs implied by the theory:

\paragraph{Moderate coherence regime ($\varepsilon \approx 0.9$).}
This regime permits modest phase drift while maintaining stable long-range alignment. It corresponds to a multiplier of $\frac{1}{\arccos(0.9)} = 2.22$. on the effective context length, providing a balanced trade-off between stability and parameter growth.

\paragraph{Strong coherence regime ($\varepsilon \approx 0.95$).}
This setting significantly reduces allowable phase drift and improves robustness of long-range dependencies, particularly in reasoning-intensive workloads. It produces a multiplier of $\frac{1}{\arccos(0.95)} = 3.15$.

\paragraph{High-fidelity coherence regime ($\varepsilon \in [0.99, 0.995]$).}
This regime enforces near-perfect phase preservation and may be appropriate for highly noise-sensitive or extremely long-context scenarios. However, it dramatically increases the required base, with multipliers ranging approximately from 7 to 10. 

Intuitively, when the RoPE base falls below the stability bound implied by a chosen $\varepsilon$, phase rotations accumulate too rapidly with position. This causes the lowest-frequency (DC-aligned) component to decay, leading to violent oscillations in the attention kernel and degradation of long-range attention. When the base exceeds the bound, phase accumulation is sufficiently slow to preserve low-frequency coherence, enabling stable attention over the full context length.

Importantly, these regimes arise directly from the geometric relationship between cosine similarity and phase deviation, rather than empirical tuning. They illustrate how tightening positional coherence requirements rapidly increases the minimum base required for stability. Also, this stability condition captures a failure mode not explained by aliasing alone. Even when positional indices remain theoretically distinguishable, excessive low-frequency phase drift can erode alignment and suppress attention scores. Enforcing a minimum cosine similarity threshold ensures that phase modulation preserves the effective DC-aligned signal and prevents the gradual loss of long-range positional coherence.

\subsubsection{Depth-Induced Compounding of DC-component stability Limit} 
We now extend the stability analysis to deep transformers. RoPE is applied independently at each layer, and while the nominal rotation angles are identical, their effect on positional alignment compounds multiplicatively.  Assuming that lowest per-layer cosine similarity for the lowest-frequency component is $\rho$
\[ \rho = \cos\!\left(\frac{L}{\text{base}}\right) \]

and that these effects are approximately independent across layers (similar independence approximations are commonly used in deep signal propagation and neural network stability analyses) \citep{poole2016chaos, xiong2020layernorm, liu2020difficulty}, the cumulative similarity after $N$ layers becomes:
\[\rho_{\text{total}} = \rho^N \]

To ensure that the total similarity remains above a global coherence threshold $\varepsilon$, we must have:
\[\rho_{\text{total}} = \rho^N \ge \epsilon \Rightarrow \rho \ge \epsilon^{1/N}\]

Substitute this into the original bound in Theorem 2, and we get
\[\text{base} \;\ge\; \frac{L}{\arccos\!\left(\epsilon^{1/N}\right)}\]

\textbf{Theorem 3 (Compounding multi-layer DC-component stability Limit)}:  In a transformer with $N$ layers, maintaining stable positional coherence $\epsilon$ over context length $L$ requires the RoPE base:
\[\text{base} \;\ge\; \frac{L}{\arccos\!\left(\epsilon^{1/N}\right)}\]

Intuitively, this bound grows faster than the single-layer DC bound as $N$ increases. For deep models (e.g., $N$ = 48), even modest $\epsilon$ = 0.9 leads to a much tighter per-layer requirement. This explains why deeper models need either larger base, or spectrum shaping (e.g., NTK-aware RoPE, YaRN) to preserve long-range coherence without inflating base.

\begin{figure}[h]
\begin{center}
\includegraphics[width=0.9\linewidth]{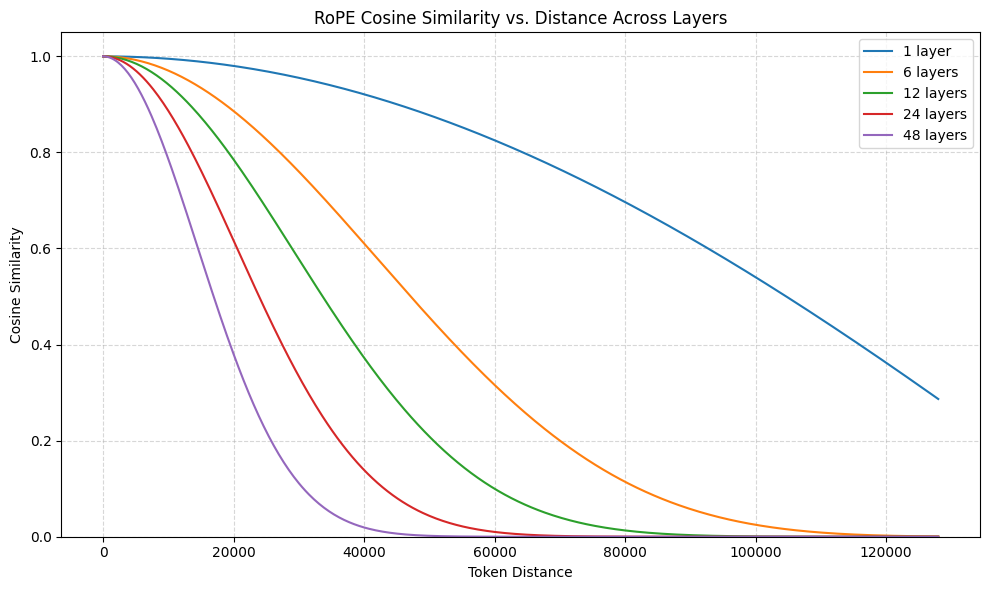}
\end{center}
\caption{Effect of RoPE Base on DC component Component Stability.}
\end{figure}

Figure 3 simulates Theorem 3 empirically: cosine similarity decays progressively faster with depth. A single layer retains $\sim 30\%$ similarity at 120k tokens; 6 layers hold to $\sim$110k; and 48 layers collapse to near-zero by merely 50k. This is the compounding bound in action—each layer multiplies rotation error, so deep models amplify small phase deviations into catastrophic long-range incoherence. Shallow stacks tolerate loose phase constraints; deep architectures demand exponentially stricter stability.

\textbf{Theorem 4 (theoretical lower bound on the RoPE base)}: The theoretical lower bound on the RoPE base is essentially the maximum of The Fundamental Aliasing Limit and Compounding multi-layer DC-component stability Limit, i.e.,
\[ \text{base}_{\min} = \max\!\left( \frac{L}{2\pi}, \frac{L}{\arccos\!\left(\epsilon^{1/N}\right)} \right) \]

it guarantees the model doesn’t lose coherent signal at long distances — the low-frequency part of the positional signal remains stable, and no fundamental aliasing.

\subsubsection{Discussion}
The bounds derived in this section are necessary but not sufficient conditions for long-context stability. They arise from fundamental properties of phase modulation and hold regardless of training data, optimization strategy, or architectural refinements.
Crucially, these results imply that extending context length without adjusting the RoPE base—or without reshaping the frequency spectrum—will inevitably lead to positional degradation, particularly in deep models. In the next section, we show that increasing the base indefinitely is also impossible due to finite numerical precision, leading to a complementary upper bound.

\subsubsection{Connection with YaRN’s frequency spectrum}
YaRN \citep{peng2023yarn} introduces a position-dependent rescaling function $f(p)$ that compresses large positional indices, reducing the effective rotation angle applied at long distances. Under YaRN, rotary embeddings no longer apply a phase rotation proportional to the raw position $p$, but instead rotate by $f(p)$, yielding an effective phase of $f(p) \theta_k$ for frequency index $\theta_k$.
From the perspective of our phase modulation framework, YaRN can be interpreted as a reparameterization of position, rather than a modification of the underlying RoPE mechanism. The spectral structure of RoPE and its layer-wise compounding behavior remain unchanged; only the maximum effective positional extent is altered.
As a result, the stability bounds derived in Sections 3 apply directly after replacing the nominal context length $L$ with the effective length $f(L)$. In particular, the DC-component stability condition becomes

\[ \text{base}_{\min, \text{YaRN}} = \max\!\left( \frac{f(L)}{2\pi}, \frac{f(L)}{\arccos\!\left(\epsilon^{1/N}\right)} \right) \]
where $N$ denotes the number of transformer layers.
This reformulation clarifies why YaRN-style scaling can extend usable context lengths for some models: by reducing $f(L)$, it relaxes the lower bound on the required base. However, YaRN does not alter the fundamental trade-off between base, depth, and numerical precision. If the resulting effective base remains below the stability threshold, or exceeds the precision-induced upper bound, positional degradation and attention collapse remain unavoidable.

\subsection{Theoretical upper bound for RoPE base from Finite Precision}
In the previous section, we derived necessary lower bounds on the RoPE base required to preserve positional coherence over long contexts. One might therefore attempt to increase the base indefinitely as context length grows. In practice, however, this strategy encounters a fundamental limitation imposed by finite numerical precision. The physical implementation of RoPE on digital hardware introduces a competing Upper Bound.

\textbf{The Smearing Effect}: In floating-point arithmetic (e.g., FP32), the gap between representable numbers increases as the magnitude of the value grows \citep{higham2002accuracy, goldberg1991floating}. For a position $p$ and a phase increment 
$\Delta\theta = \text{base}^{-2i/d}$, the rotation "smears" or "erases" when the increment falls below the hardware's resolution at that specific magnitude.

\textbf{Theorem 5 (Numerical Erasure)}: Positional coherence is maintained only if the phase increment $\Delta\theta$ is larger than the machine epsilon $\epsilon_{\text{mach}}$ (approximately $7.81 \times 10^{-3}$, $9.77 \times 10^{-4}$, $1.19 \times 10^{-7}$, $2.22 \times 10^{-16}$ for BF16, FP16, FP32 and FP64 respectively). The RoPE base requires:
\[\text{base} < 1/\epsilon_{\text{mach}}\]
This bound arises from finite-precision arithmetic \citep{higham2002accuracy, goldberg1991floating}. Increasing the RoPE base beyond this limit, the fundamental oscillator enters a state of Phase Erasure. Even if the model hasn't reached the Nyquist alias point, the ``ruler'' stops moving because $p+\Delta\theta$ rounds back to $p$ in the hardware's mantissa. This creates ``dead zones'' in long-context attention where the model cannot distinguish between tokens $p$ and $p+1$, effectively breaking the relative positional encoding property of RoPE.

\subsection{The Feasibility Region for RoPE Base Selection}
Combining the lower bounds from Theorem 4 with the precision-induced upper bound in Theorem 5 yields a constrained feasibility region:
\[
\max\!\left(
    \frac{L}{2\pi},
    \frac{L}{\arccos\!\left(\epsilon^{1/N}\right)}
\right)
\;<\;
\text{base}
\;<\;
\frac{1}{\varepsilon_{\text{mach}}}
\]

We refer to this interval as the feasibility region for RoPE base selection. When the interval is non-empty, long-context positional coherence is theoretically achievable. When the interval collapses, no choice of base can simultaneously satisfy stability and precision constraints.

This characterization explains several empirical observations: (i) why increasing the base improves long-context behavior only up to a point; (ii) why deeper models require more careful base selection, and (iii) why strategies effective under FP32 may fail under lower-precision arithmetic.

\subsection{Discussion}
The upper bound highlights a fundamental limitation of RoPE that complements aliasing and architectural design. Unlike lower bounds, which scale with context length and depth, the precision-induced upper bound is dictated by hardware and numerical format. Together with the lower bounds, this result formalizes a “Goldilocks zone” for RoPE base selection and motivates spectrum-shaping or scheduling strategies that redistribute positional capacity without exceeding precision limits. 

\section{Case Study: RoPE Bounds in State-of-the-Art Transformer Models}
To evaluate the practical relevance of the proposed RoPE base bounds, we analyze a set of widely used state-of-the-art transformer models, comparing their deployed RoPE configurations against both existing empirical heuristics and the theoretical limits derived in Sections 3. This section grounds the theory in real architectural choices made by model designers, highlighting where current systems possibly operate safely—and where they possibly approach or violate fundamental stability constraints.

\subsection{Methodology}
For each model, we collect the number of layers ($N$), maximum context length ($L$) and actual or effective RoPE base used in deployment.  We then compute Baichuan empirical lower bound \citep{men2024ropebase}, theoretical minimum base from Theorem 4 (multi-layer DC-component stability and aliasing avoidance), using a coherence threshold $\epsilon$ = 0.95, and theoretical maximum base from Theorem 5 (finite-precision numerical erasure), assuming FP32 arithmetic ($\varepsilon_{\text{mach}} = 1.19 \times 10^{-7}$ for FP32). Each model is classified according to whether its deployed base lies within the Goldilocks Zone—i.e., strictly between the theoretical minimum and maximum bounds.

\subsection{Comprehensive RoPE Bound Comparison (\texorpdfstring{$\epsilon$ = 0.95}{eps = 0.95})}

\begin{table*}[t]
\centering
\footnotesize

\resizebox{\textwidth}{!}{%
\begin{tabular}{l c c c c c c c c l}
\toprule
\textbf{Model} &
\textbf{Layers} &
\textbf{Context} &
\textbf{Actual Base} &
\textbf{Baichuan LB} &
\textbf{Aliasing} &
\textbf{DC Stability} &
\textbf{Min Base} &
\textbf{Max Base} &
\textbf{Status} \\
\midrule
\\ \hline \\
LLaMA-7B & 32 & 2k & 10{,}000 & 16{,}000 & 326 & 36{,}197 &
$>36{,}197$ & $<8{,}388{,}608$ & {\color{orange}Unstable} \\

LLaMA2-7B & 32 & 4k & 10{,}000 & 27{,}000 & 652 & 72{,}394 &
$>72{,}394$ & $<8{,}388{,}608$ & {\color{orange}Unstable} \\

Baichuan2-7B & 32 & 4k & 10{,}000 & 27{,}000 & 652 & 72{,}394 &
$>72{,}394$ & $<8{,}388{,}608$ & {\color{orange}Unstable} \\

LLaMA3-8B & 32 & 8k & 500{,}000 & 84{,}000 & 1304 & 144{,}785 &
$>144{,}785$ & $<8{,}388{,}608$ & {\color{green}Very Stable} \\

Mistral-v0.2 & 32 & 32k & $10^6$ & 640{,}000 & 5215 & 579{,}145 &
$>579{,}145$ & $<8{,}388{,}608$ & {\color{green}Very Stable} \\

DeepSeek-V2 (YaRN=40) & 60 & 128k & $4\times 10^5$ & 7{,}800{,}000 & 20{,}859 &
3{,}170{,}586 & $>3{,}170{,}586$ & $<8{,}388{,}608$ &
{\color{orange}Unstable} \\

DeepSeek-V3 (YaRN=40) & 61 & 128k & $6.4\times 10^6$ & 7{,}800{,}000 & 20{,}859 &
3{,}195{,}319 & $>3{,}195{,}319$ & $<8{,}388{,}608$ &
{\color{green}Very Stable} \\

Target (Hypothetical) & 96 & 1M & --- & $5.1\times 10^8$ & 166{,}894 &
32{,}086{,}169 & $>32{,}086{,}169$ & $<8{,}388{,}608$ &
{\color{red}Infeasible} \\

\bottomrule
\end{tabular}
} 
\caption{Stability bounds for various transformer models under FP32 precision.}
\begin{tablenotes}
\footnotesize
\item(*The base for DeepSeek-v2 and v3 is reported or inferred from publicly available configurations, incorporates YaRN, and is approximated for comparison purposes.)
\end{tablenotes}
\end{table*}

\subsection{Key Observations}
\subsubsection{Explaining “Marginal but Operational” Models}
Early LLaMA-family models (e.g., LLaMA-7B and LLaMA-2-7B) violate both the Baichuan empirical lower bound and the theoretical DC-stability minimum derived in Section 3. Under our framework, these configurations lie within a marginal or weakly unstable regime, where layer-compounded DC-component decay is expected but does not immediately produce catastrophic failure at moderate sequence lengths.

This apparent discrepancy arises because empirical evaluations often measure average downstream performance rather than positional signal integrity. Our analysis indicates that moderate-depth models can remain operational despite sub-threshold RoPE bases, as positional degradation accumulates gradually and may remain partially masked by training priors, local attention structure, and dataset statistics. However, such models exhibit reduced positional robustness, making them significantly more sensitive to long-range dependency tasks and context extrapolation.

This interpretation is consistent with later empirical observations showing that early LLaMA variants exhibit systematic degradation when sequence length approaches or exceeds their training context, despite appearing functional under standard benchmarks. Rather than contradicting empirical heuristics, these results suggest that empirical bounds capture coarse operational viability, whereas the theoretical stability bound predicts long-range positional reliability and failure onset.

\subsubsection{Diagnosing Failures and Base-Driven Fixes in Long-Context Models}
Several widely deployed long-context models exhibit systematic degradation that cannot be explained by empirical RoPE heuristics alone. In this section, we analyze representative failure cases and show that their observed behavior is consistent with violations of the theoretical stability bound derived in Section 3.

\textbf{LLaMA-2-7B: Stability Deficit at Native Context Length}

LLaMA-2-7B employs a RoPE base of 10{,}000 with 32 transformer layers and a nominal context length of 4k tokens. Under our stability analysis, this configuration lies below the minimum base required to preserve the DC-aligned positional component across depth ($\sim 72{,}394$ for 32 layers at a 4k-token context), even before any context extension is attempted.

Empirically, this instability manifests in two well-documented ways \citep{liu2024lost}. First, in the Lost-in-the-Middle study, LLaMA-2-7B performance dropped sharply when relevant information appeared near the middle of the input sequence. In our framework, this behavior aligns with progressive attenuation of the global positional signal, as small per-layer phase misalignments compound multiplicatively, weakening positional coherence far from sequence boundaries.

Second, community evaluations report an abrupt increase in perplexity once sequence length exceeds the trained 4k limit \citep{touvron2023llama2, touvron2023llama2, togetherai2023evals}. Rather than degrading smoothly, performance collapses over a narrow range of additional tokens. Similar abrupt transitions in long-range attention structure have been observed in attention-centric analyses of RoPE extensions \citep{zhong2025ropeattention}, which report sharp redistribution of attention mass when scaling strategies operate outside stable positional regimes.

This pattern is characteristic of layer-compounded phase decoherence: while low-frequency RoPE components remain well-behaved, higher-frequency bands alias early, and their effects propagate across layers until a critical instability threshold is crossed. Importantly, this failure does not require hard phase wraparound at a single frequency. Instead, it emerges from the interaction of finite RoPE base, model depth, and accumulated phase error.

\textbf{Effectiveness of Large-Base RoPE Retrofits}

The strongest practical evidence supporting the role of RoPE base selection comes from both community-driven long-context retrofits of LLaMA-2 and first-party architectural choices in LLaMA-3.

Several independent efforts extended LLaMA-2 to 32k context by increasing the RoPE base from 10,000 to approximately 500,000 while leaving model depth and attention structure unchanged \citep{chen2023pi, peng2023yarn, bloc97ntk2023}. According to our theoretical bounds, this modification shifts the model from a weakly unstable regime—where DC-component decay accumulates across layers—to a stable operating region in which multi-layer positional coherence can be preserved.

This observation explains why long-context fine-tuning becomes viable only after increasing the RoPE base, and why attempts to retain the original base consistently fail \citep{liu2024lost, men2024ropebase}. These retrofits demonstrate that RoPE base selection constitutes a fundamental architectural constraint for long-context modeling. Once the base falls below the theoretical stability minimum, positional degradation cannot be fully compensated by optimization, data scaling, or training strategies.

In contrast, LLaMA-3-8B adopts a RoPE base of approximately $500{,}000$ as part of its native design \citep{touvron2023llama2}. This value exceeds both the theoretical stability minimum for an 8k-token context ($\approx 144{,}785$ for 32 layers) and the Baichuan empirical bound ($\approx 84{,}000$). Correspondingly, LLaMA-3 exhibits significantly improved long-context robustness, supporting the claim that depth- and length-aware theoretical bounds provide a more reliable predictor of stability than fixed empirical heuristics.

\textbf{DeepSeek-V2: Violating Both Empirical and Theoretical Bounds}

DeepSeek-V2 provides a larger-scale counterexample. The model uses 60 transformer layers and supports a 128k context window. Its nominal RoPE base of $10{,}000$ is augmented by YaRN scaling, yielding an effective inference-time base of approximately $4 \times 10^{5}$ \citep{deepseekv2}. However, this effective value remains substantially below both the empirical Baichuan lower bound and the theoretical stability minimum predicted by Theorem~4 ($\approx 3.17 \times 10^{6}$).

Under these conditions, multi-layer DC-component decay and frequency aliasing become unavoidable. As model depth amplifies phase misalignment, long-range positional signals deteriorate, producing unstable attention patterns and degraded coherence over extended contexts. Reported weaknesses in long-context reasoning and retrieval therefore follow naturally from operating outside the theoretically stable regime rather than representing anomalous training failures.

\textbf{Summary}

Across both moderate and long context lengths, these case studies reveal a consistent pattern: when the RoPE base falls below the stability bound determined by model depth and target context length, positional signal degradation becomes unavoidable. While such failures were previously documented empirically, our framework provides a principled explanation for why they arise, why they often emerge abruptly, and why they cannot be reliably mitigated through training or optimization alone.

Similar scaling-driven generalization breakdowns have been observed in transfer learning literature, where models exhibit predictable performance collapse when evaluated outside their training distribution \citep{hernandez2021scaling}. In the context of RoPE-based transformers, the stability bound plays an analogous role by defining the architectural limits of positional extrapolation.

\subsubsection{The Precision Wall at One Million Tokens}
Extending transformers to extremely long contexts introduces a numerical feasibility constraint in addition to positional stability: the finite-precision representation of RoPE phase rotations. Even when the RoPE base satisfies the stability minimum, sufficiently long sequences can exceed the representational resolution of FP32 arithmetic, leading to positional signal attenuation and degraded attention structure \citep{higham2002accuracy, goldberg1991floating}. Related precision-induced degradation has also been observed in long-context attention simulations and mixed-precision training studies \citep{micikevicius2018mixedprecision}.

\paragraph{Empirical and illustrative examples:}

\begin{itemize}

\item \textbf{DeepSeek-V2:} supports a 128k-token context window using 60 transformer layers. The model employs a nominal RoPE base of $10{,}000$ and relies on YaRN scaling, yielding an effective inference-time base of approximately $4 \times 10^{5}$ \citep{deepseekv2}. This effective base remains substantially below both the theoretical stability minimum ($\approx 3.17 \times 10^{6}$) and the practical FP32 precision ceiling ($\approx 8.4 \times 10^{6}$).

Operating in this regime produces two interacting failure modes. First, the base is insufficient to preserve the DC-aligned positional component across depth, resulting in progressive multi-layer phase decoherence. Second, aliasing in higher-frequency RoPE components emerges early in the sequence. Together, these effects lead to degraded long-range attention coherence. Empirical evaluations report significant degradation in retrieval and multi-hop reasoning tasks at long sequence lengths, consistent with these predicted stability violations.

\item \textbf{DeepSeek-V3:} extends the architecture to 61 layers and increases the nominal RoPE base to $160{,}000$, combined with YaRN scaling that produces an effective inference-time base of approximately $6.4 \times 10^{6}$ \citep{deepseekv3}. This configuration exceeds the theoretical stability minimum for a 128k-token context ($\approx 3.20 \times 10^{6}$) while remaining below the FP32 precision ceiling.

Consequently, DeepSeek-V3 operates within the theoretically feasible region but remains close to the numerical precision boundary. While the model maintains improved positional coherence relative to earlier configurations, attention behavior near extreme context lengths exhibits increased sensitivity to positional noise. This observation suggests that inference-time scaling strategies can partially mitigate stability deficits but do not eliminate numerical precision constraints.

\item \textbf{Target 1M Model} (1 million tokens, 96 layers) illustrates the extreme limit of RoPE scaling. Preserving DC-aligned positional components across this depth requires a RoPE base exceeding approximately $3.2 \times 10^{7}$. However, representing phase increments associated with such large bases approaches the mantissa resolution limits of FP32 arithmetic.

According to the upper bound derived in Section 3, FP32 precision imposes an approximate representational ceiling near $\approx 8.4 \times 10^{6}$. When the required base exceeds this value, phase increments between adjacent token positions become smaller than machine resolution, causing positional phase updates to round to identical values. This effect produces what we term the Precision Wall: a regime in which positional distinctions collapse despite theoretical stability of the encoding scheme.

\end{itemize}

\textbf{Dual Constraints on Ultra-Long Context Transformers}

These examples reveal two complementary architectural limits governing RoPE scaling:

\textbf{Stability Bound}: The RoPE base must exceed a minimum value determined by model depth and target context length to preserve DC-aligned positional signals. Violating this bound produces cumulative phase misalignment and multi-layer aliasing.

\textbf{Precision Wall}: Even when the stability bound is satisfied, extremely large RoPE bases interact with finite floating-point resolution. When phase increments fall below machine precision, positional updates become numerically indistinguishable, degrading relative positional encoding.

These results demonstrate that arbitrarily extending transformer context length requires coordinated scaling of RoPE base, model depth, and numerical representation. Increasing context length without accounting for both stability and precision constraints risks predictable degradation in long-range attention behavior. Future ultra-long-context architectures may therefore require either alternative positional parameterizations or higher-precision numerical formats to overcome these limits.

\subsection{Implications and Practical Guidelines}
The analysis of real-world long-context models, retrofits, and extreme-scale experiments highlights several fundamental lessons about RoPE design and deployment:

\textbf{RoPE Base Selection is a Fundamental Architectural Constraint for Stable Long-Context Operation}

Across varying context lengths, model depths, and sequence extension strategies, the RoPE base must satisfy the stability bound derived in Section 3 to preserve DC-aligned positional components. When the base falls below this depth- and length-dependent minimum, multi-layer cosine decay accumulates, leading to aliasing and abrupt attention collapse. Community-driven LLaMA-2 32k retrofit experiments \citep{ding2024longrope, peng2023yarn, bloc97ntk2023} demonstrate that increasing the RoPE base alone—without modifying attention structure or network depth—can shift models from unstable to stable operating regimes. These results indicate that RoPE base selection is not merely a tunable hyperparameter, but a defining architectural factor that governs the feasible operating range of long-context transformers. Once the base lies below the theoretical minimum, stability is unlikely to be reliably recovered through optimization, data scaling, or training heuristics alone.

\textbf{Depth- and Length-Aware Bounds Predict Observed Model Behavior}

Across multiple architectures—including LLaMA-2, LLaMA-3, and DeepSeek families—depth- and context-aware stability bounds consistently predict observed success and failure regimes, while fixed empirical heuristics frequently fail to generalize. Incorporating both model depth and target context length provides a predictive and actionable framework for designing or retrofitting long-context transformers. These bounds enable practitioners to anticipate multi-layer positional degradation before training or deployment, reducing reliance on trial-and-error scaling strategies.

\textbf{Empirical Rules are Insufficient}

Widely adopted heuristics, such as fixed bases of 10k or 16k, fail to account for the coupled interaction between transformer depth and target context length. Empirical failures such as LLaMA-2-7B’s “lost in the middle” phenomenon \citep{liu2024lost} and the long-range collapse observed in DeepSeek-V2 provide practical confirmation of the theoretical predictions derived in this work. Similar limitations of static RoPE scaling have been observed in recent long-context studies \citep{men2024ropebase}. Models that satisfy empirical rules calibrated for shallow architectures or shorter contexts may still experience catastrophic DC-component decay when deployed in deeper networks or extended context regimes.

\textbf{Finite-Precision Limits Impose an Upper Bound}

Even when the RoPE base satisfies the stability minimum, numerical representation introduces an additional constraint for extremely long contexts. The 1M-token target configuration illustrates that bases exceeding approximately 8.4 million induce catastrophic numerical erasure under FP32 precision, independent of training strategy or architectural design. This “Precision Wall” establishes a hard upper bound on positional encoding fidelity in current floating-point representations. Crossing this boundary causes positional distinctions to collapse, resulting in attention degeneracy and loss of long-range dependency modeling. Consequently, long-context scaling cannot be pursued indefinitely without coordinated changes to numerical precision, positional representation, or computational format.

Practical Guidelines for Model Design and Retrofits
The theoretical and empirical results of this study suggest several practical design guidelines:
\vspace{-4pt}
\begin{itemize}
    \item Compute the stability minimum based on layers and target context before training or fine-tuning.
    \item Ensure the RoPE base does not exceed numerical precision limits to prevent the “Precision Wall.”
    \item Validate multi-layer positional coherence through simulation or small-scale experiments.
    \item Prioritize base adjustment in retrofits of long-context models rather than relying on architectural hacks, as attention structure alone cannot recover lost DC coherence.
\end{itemize}
\vspace{-4pt}

In summary, RoPE base selection, model depth, and numerical precision must be jointly optimized for reliable long-context transformer performance. Following these principles ensures stable multi-layer positional coherence, prevents catastrophic attention collapse, and provides actionable guidance for both new model designs and community-driven long-context extensions.

\section{Conclusion and Future Work}
In this work, we provide a theoretical framework for understanding Rotary Positional Embeddings (RoPE) in long-context transformers, quantifying both stability lower bounds and finite-precision upper bounds. Key takeaways include:

\begin{itemize}
    \item Stability is depth- and length-dependent. Small per-layer phase misalignments compound multiplicatively, producing DC-component decay and attention collapse if the base is too low.
    \item Precision imposes a hard ceiling. FP32 limits create a “Precision Wall” for extremely long sequences, beyond which positional distinctions are erased regardless of model depth or training.
    \item Empirical heuristics are insufficient. Historical rules of thumb cannot predict instability in deeper or longer-context models.
    \item Case studies validate theory. LLaMA-2, LLaMA-3, and DeepSeek variants demonstrate that observed successes and failures align closely with the theoretical bounds.
\end{itemize}

Future Directions

\begin{itemize}
    \item Mixed-Precision or Novel Encodings: Extending the usable context range beyond FP32 limits.
    \item Adaptive RoPE or Layer-Wise Scaling: Dynamically adjusting the base across layers for improved stability.
    \item Integration with Retrieval and Memory-Augmented Models: Investigating how RoPE bounds interact with retrieval mechanisms and long-term memory modules.
    \item Benchmarking and Automated Verification: Tools to automatically compute stability and precision bounds for arbitrary architectures.
\end{itemize}

Overall, our framework provides practical, theoretically grounded guidance for long-context transformer design, retrofitting, and evaluation. By explicitly considering base, depth, and precision, researchers and practitioners can ensure stable attention, prevent aliasing, and unlock new capabilities in extended-context modeling.

\bibliographystyle{tmlr}
\bibliography{references}

\appendix
\section{Notation, Assumptions, and Model}
We work with the standard RoPE-style construction applied to the query/key vectors as rotations in paired dimensions.
\begin{itemize}
    \item Let the model dimension be $d$. Define $m = d/2$ complex-valued oscillator channels, indexed by $i \in \{0, \ldots, m-1\}$.
    \item Let the position index be $p \in \{0, \ldots, L-1\}$, where $L$ is the maximum context length.
    \item For each channel $k$, define an angular increment (per-token phase) $\theta_k > 0$. The RoPE-modulated complex phasor at position $p$ and channel $k$ is $e^{ip\theta_k}$.
    \item We assume the standard geometric-frequency parameterization: there exists a base $\text{base} > 1$ and a reference angular increment $\theta_0 = 1$ such that \[ \theta_k = \theta_0\, \text{base}^{-2(k-1)/d}, \qquad k = 1, 2, \ldots, d/2. \] This makes the per-token phase increments decrease geometrically across channels. The smallest angular increment is \[ \theta_{d/2} = \theta_0\, \text{base}^{-2(d/2 - 1)/d} = \theta_0\, \text{base}^{-(d-2)/d}. \]
    \item A period $T > 0$ for channel $k$ is any positive integer satisfying \[ \theta_k T \bmod 2\pi = 0, \] equivalently, \[ e^{iT\theta_k} = 1. \] The smallest such positive integer $T$ is the fundamental period of that channel. In continuous terms, \[ T_k = \frac{2\pi}{\gcd(\theta_k,\, 2\pi)}, \] and in practice the condition is simply \[ \theta_k T \in 2\pi \mathbb{Z}. \]
    \item The global (vector) positional encoding at position $p$ is the 
        $d/2$–dimensional vector of complex phasors

        \[
        \mathbf{v}(p)
        = \bigl[\, e^{ip\theta_1},\; e^{ip\theta_2},\; \ldots,\; e^{ip\theta_{d/2}} \,\bigr].
        \]

        Two positions $p$ and $q$ are \emph{indistinguishable} under RoPE if

        \[
        \mathbf{v}(p) = \mathbf{v}(q)
        \quad\text{(componentwise equality)}.
        \]

        More generally, positions $p$ and $q$ are \emph{ambiguous} if 
        $\mathbf{v}(p)$ and $\mathbf{v}(q)$ differ only by a common global phase.  
        For our purposes, we use the strict componentwise equality condition above, 
        as it provides a clear necessary condition for ambiguity.
\end{itemize}

\paragraph{Fundamental oscillator.}
The \emph{fundamental oscillator} is the slowest oscillator (i.e., the one
with the smallest angular increment $\theta_k$). Because the slowest
oscillator is least prone to aliasing, it serves as the global reference.

We define
\[\theta_{\min} = \min_k \theta_k = \theta_{d/2} \approx \frac{1}{\text{base}},\]

and its corresponding period
\[T_{\min} = \frac{2\pi}{\theta_{\min}} \approx 2\pi \cdot \text{base}.\]

\section{Proof for Theorem 1 (Fundamental Aliasing Limit)}

\begin{lemma}[Periodicity leads to indistinguishability]
If there exists a positive integer $T \le L$ such that for some channel $k$,
\[\theta_k T \bmod 2\pi = 0,\]

then for every position $p$ we have
\[\mathbf{v}(p) = \mathbf{v}(p+T)\]

for that channel. Consequently, the positional encoding is not injective on
$[0, L]$; it repeats with period $T$.
\end{lemma}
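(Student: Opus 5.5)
The argument is a direct computation on the $k$-th component of $\mathbf{v}$. Fix the channel $k$ and the integer $T$ with $1 \le T \le L$ and $\theta_k T \in 2\pi\mathbb{Z}$, say $\theta_k T = 2\pi m$ for some integer $m$. For any position $p$, we expand the phasor using the exponential law:
\[
e^{i(p+T)\theta_k} = e^{ip\theta_k}\, e^{iT\theta_k} = e^{ip\theta_k}\, e^{2\pi i m} = e^{ip\theta_k}.
\]
Hence the $k$-th component of $\mathbf{v}(p+T)$ equals that of $\mathbf{v}(p)$. This is exactly the claimed identity ``for that channel.''

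Iterating, we get $e^{i(p+nT)\theta_k} = e^{ip\theta_k}$ for every $n \in \mathbb{N}$, so the channel-$k$ phasor is periodic with period $T$. For the non-injectivity conclusion, take $p = 0$ and $q = T$. Both lie in $[0,L]$ because $T \le L$, they are distinct because $T \ge 1$, and they produce the same channel-$k$ phasor. So the map $p \mapsto e^{ip\theta_k}$ is not injective on $[0,L]$. If the period $T$ is shared by every channel, the componentwise condition in the Appendix holds, and $\mathbf{v}(0) = \mathbf{v}(T)$ means the full vector encoding is not injective either.

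The delicate step is the passage from a single channel to the whole vector $\mathbf{v}$. As literally stated, periodicity in one channel does not force $\mathbf{v}(p) = \mathbf{v}(p+T)$ componentwise, because other channels could still separate $p$ from $p+T$. I would therefore state the conclusion as per-channel non-injectivity, which is what the phrase ``for that channel'' supports. The full-vector conclusion then holds under the extra hypothesis that $T$ is a common period of all channels, i.e.\ $\theta_j T \in 2\pi\mathbb{Z}$ for every $j$.

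In the setting of Theorem 1 this hypothesis is natural, since the fundamental channel $\theta_{\min}$ has the longest period and is the global reference. If that channel wraps within $[0,L]$, the coarse ordering it is meant to provide is lost. That loss is the content of Theorem 1, whose bound $L < 2\pi\cdot\text{base}$ follows by contraposition.
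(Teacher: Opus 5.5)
Your argument matches the paper's proof: you factor $e^{i(p+T)\theta_k}=e^{ip\theta_k}e^{iT\theta_k}$ and use $e^{iT\theta_k}=1$, and taking $p=0$, $q=T$ makes the non-injectivity explicit. Restricting the conclusion to the single channel is a genuine correction, because the paper passes straight from this one-channel identity to $\mathbf{v}(p)=\mathbf{v}(p+T)$, yet $\theta_1=1$ gives $e^{iT}\neq 1$ for every positive integer $T$, so the full vector never repeats at integer positions (likewise, for an integer base the transcendence of $\pi$ rules out an exact integer period of $\theta_{\min}$, so the ``contraposition'' to Theorem~1 in your last paragraph is only heuristic, as it is in the paper).
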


\begin{proof}
If $\theta_k T \in 2\pi \mathbb{Z}$, then
\[e^{i(p+T)\theta_k}
  = e^{ip\theta_k} \cdot e^{iT\theta_k}
  = e^{ip\theta_k}.\]

Thus,
\[\mathbf{v}(p) = \mathbf{v}(p+T)\]
for every $p$. \qedhere
\end{proof}

This lemma gives us a mechanistic way to show non-injectivity (aliasing):
find an integer period $T \le L$ for the fundamental oscillator channels.

As mentioned in Section~2, for RoPE the fundamental oscillator is the
component with the lowest frequency,
\[\theta_{\min} = \min_k(\theta_k) = \theta_{d/2} \approx \frac{1}{\text{base}}.\]

We define the primary period of the fundamental oscillator as
\[T_{\min} = \frac{2\pi}{\theta_{\min}}
          \approx 2\pi \cdot \text{base}.\]

According to Lemma~B.1, a necessary condition (under the componentwise
equality notion) for avoiding guaranteed periodic collisions due solely to
global common-period repetition is

\[L \le T_{\min}.\]

Conversely, if $L > T_{\min}$, then there exist distinct positions within
$[0, L]$ that match exactly across the fundamental oscillator channels
(i.e., positional encodings repeat), so the positional encoding is not
injective. \qedhere

\section{Proof for Theorem 2- Single-layer DC-component stability limit}
Consider two positions $p$ and $q$ with separation 
$\delta = |p - q|$.  Let queries and keys be complex-valued:

\[q_k(p) = q_k\, e^{ip\theta_k}, \qquad k_k(q) = k_k\, e^{iq\theta_k}.\]

The dot-product contribution at frequency $k$ is
\[q_k(p)\, \overline{k_k(q)} = q_k\, \overline{k_k}\, e^{i(p - q)\theta_k}.\]

The DC component corresponds to $(p - q)\theta_k = 0$, which occurs when
\begin{enumerate}
\item $p = q$ (self-attention), or
\item $\theta_k \to 0$ (very low-frequency dimensions, with the lowest
      frequency $\theta_{\min} = 1/\text{base}$).
\end{enumerate}

For RoPE, the DC component corresponds to the lowest-frequency oscillator,
i.e., the fundamental oscillator with
\[\theta_{\min} = \frac{1}{\text{base}}.\]

The real part of the RoPE attention kernel becomes
\[\Re\!\bigl(q_k \overline{k_k}\bigr)\,
    \cos\!\bigl((p - q)\theta_k\bigr)
  = \Re\!\bigl(q_k \overline{k_k}\bigr)\,
    \cos(\delta\,\theta_k),
\]
where $\delta = |p - q|$.

\emph{Note.} The imaginary term

\[
\Im\!\bigl(q_k \overline{k_k}\bigr)\,
    \sin\!\bigl((p - q)\theta_k\bigr)
\]

vanishes under the symmetry condition
\[
q_k(p)\,\overline{k_k(q)}
  = q_k(q)\,\overline{k_k(p)}.
\]

DC-component stability for the fundamental oscillator
$\theta_{\min} = 1/\text{base}$ means that its contribution remains
stable over a distance $\delta$ if, for some tolerance
$\epsilon \in (0,1)$,

\[\cos(\delta\,\theta_{\min}) \ge \epsilon.\]

For uniform stability (no sign flips) over all
$\delta \in [0, L]$, we require

\[\delta\,\theta_{\min} \le \arccos(\epsilon).\]

Substituting $\theta_{\min} = 1/\text{base}$ gives
\[\text{base} \ge \frac{\delta}{\arccos(\epsilon)}.\]

The worst case occurs at $\delta = L$, yielding the condition
\[\text{base} \ge \frac{L}{\arccos(\epsilon)}.\]

\section{Proof for Theorem 3 (Compounding multi-layer DC-component stability Limit)}
Because transformer layers are composed sequentially and each layer applies
RoPE independently, the contribution of the DC‑aligned component can be
modeled as a product of per‑layer attenuation factors.  If these factors are
independent and identically distributed and characterized by a cosine
alignment term
\[\rho = \cos(\delta\,\theta_k),\]

then the DC‑aligned signal decays multiplicatively with depth.  Thus, small
angular misalignments at each layer compound exponentially across layers,
yielding a stricter upper bound on the allowable phase rotation for
DC‑component stability.

Accordingly, the DC‑component stability condition becomes
\[\rho^N = \bigl(\cos(\delta\,\theta_{\min})\bigr)^N \ge \epsilon,\]

which is equivalent to
\[\cos(\delta\,\theta_{\min}) \ge \epsilon^{1/N}.\]

Repeating the same steps as in Appendix~C, we obtain
\[\text{base} \ge \frac{\delta}{\arccos(\epsilon^{1/N})}.\]

The worst case occurs at $\delta = L$, giving the depth‑aware stability
condition
\[\text{base} \ge \frac{L}{\arccos(\epsilon^{1/N})}. \qquad\qedhere\]

\textbf{Intuition}. In deep transformers, RoPE-induced phase errors do not average out across layers. Instead, each layer attenuates the DC-aligned component multiplicatively. As a result, small per-layer angular decoherence compounds exponentially with depth, tightening the RoPE base requirement as depth increases.

\section{Proof for Theorem 4 (Combined theoretical lower bound on the RoPE base)}
Combining the two sufficient conditions we derived:

\begin{itemize}
\item From Theorem~1: to avoid global position–signal aliasing, we require
\[\text{base} > \frac{L}{2\pi}.\]
\item From Theorem~3: to maintain stable positional coherence $\epsilon$
over a context length $L$, we require
\[\text{base} \ge \frac{L}{\arccos(\epsilon^{1/N})}.\]
\end{itemize}

Thus,
\[
\text{base}_{\min}
  = \max\!\left(
      \frac{L}{2\pi},
      \frac{L}{\arccos(\epsilon^{1/N})}
    \right)
\]

satisfies both conditions simultaneously and yields the stated combined
lower bound. \qedhere

\section{Proof of Theorem 5: The Numerical Erasure Bound}
In any digital Phase Modulation (PM) system, for two adjacent positions $p$ and $p+1$ to be distinguishable, the difference in their mapped values must exceed the minimum resolution of the hardware. In RoPE, the positional information is encoded as a rotation on the unit circle. The smallest phase increment happens between adjacent steps for the fundamental (slowest) oscillator, which is:
\[
\Delta\theta
  = (p+1)\theta_{\min} - p\theta_{\min}
  = \theta_{\min}
  = \frac{1}{\text{base}}.
\]

For the hardware to register a change in position at index $p$, we require
\[
f(p + \Delta\theta) \ne f(p).
\]

Floating‑point numbers (IEEE~754) represent values using a fixed‑length
mantissa.  The machine epsilon $\epsilon_{\mathrm{mach}}$ denotes the
smallest relative spacing between two representable floating‑point numbers.
To maintain positional coherence, the phase increment must exceed the
hardware resolution:
\[
\Delta\theta > \epsilon_{\mathrm{mach}}.
\]

Substituting $\Delta\theta = 1/\text{base}$ into the hardware‑resolution
condition $\Delta\theta > \epsilon_{\mathrm{mach}}$ gives
\[
\frac{1}{\text{base}} > \epsilon_{\mathrm{mach}}.
\]

Rearranging yields
\[
\text{base} < \frac{1}{\epsilon_{\mathrm{mach}}}.
\]

This inequality defines the \emph{Numerical Erasure Wall}: if the base
exceeds this limit, the hardware can no longer distinguish between
positions $p$ and $p+1$, causing a collapse of relative positional
awareness.

\end{document}